\relax
\documentclass[letterpaper]{article}
\usepackage[final]{nips_2016} 
\usepackage{times}
\usepackage{helvet}
\usepackage{courier}
\usepackage{amssymb}
\usepackage{graphicx}
\usepackage{algorithm}
\usepackage{algorithmic}
%
%
\newcommand{\learner}{\mathcal{L}}
\newcommand{\target}{c^*}
\newcommand{\scriptC}{\mathcal{C}}

\newcommand{\scriptF}{\mathcal{F}}
\newcommand{\real}{\mathbb{R}}
{\vspace{0.1in}}
\newtheorem{proposition}{\vspace{0.1in}{\noindent \bf Proposition}}{\vspace{0.1in}}
\newtheorem{lemma}{\vspace{0.1in}{\noindent \bf Lemma}}{\vspace{0.1in}}
\newenvironment{proof}{\vspace{0.1in}{\noindent \bf Proof}}{\vspace{0.1in}}

%
\frenchspacing
\setlength{\pdfpagewidth}{8.5in}
\setlength{\pdfpageheight}{11in}
\pdfinfo{
/Title (Classifying Prediction Errors)
/Author (Christopher Meek)}
\setcounter{secnumdepth}{0}  
 \begin{document}
%
\title{A Characterization of Prediction Errors}
\author{Christopher Meek \\Microsoft Research\\One Microsoft Way\\Redmond, WA 98052}
\maketitle
\begin{abstract}
Understanding prediction errors and determining how to fix them is critical to building effective predictive systems. In this paper, we delineate four types of prediction errors (mislabeling, representation, learner and boundary errors) and demonstrate that these four types characterize all prediction errors. In addition, we describe potential remedies and tools that can be used to reduce the uncertainty when trying to determine the source of a prediction error and when trying to take action to remove a prediction error.
\end{abstract}

\section{Introduction}

Prediction errors arise in interactive machine learning systems (e.g., Fails and Olsen 2003), \nocite{Fails:2003:IML:604045.604056} machine teaching (e.g. Simard et al 2014), \nocite{ICEarXiv} and when statisticians, scientists and engineers build predictive systems. Our goal in this paper is to provide an exhaustive categorization of the types of prediction errors and to provide guidance on actions one can take to remedy prediction errors. We suspect that this will be helpful to both expert and non-expert users trying to leverage machine learning and statistical models in building predictive systems.

Our characterization of prediction errors has four top-level categories; mislabeling, representation, learner, and boundary errors. Each of these error types are associated with specific deficiencies that, when identified, are potentially remedied.  Furthermore, we prove that the categorization into these error types is sufficient to characterize all prediction errors. 

We also suggest actions that can be taken to detect and remove prediction errors. 
With the aim of removing an entire type of prediction error from consideration we introduce the concept of a consistent learning algorithm. We demonstrate that there are consistent learning algorithms and describe how, when consistent learning algorithms are used, none of the prediction errors are learner errors. We also describe how a teacher might benefit from the identification of an {\em invalidation set}; a minimal set of labeled examples that contain one or more prediction errors. Finally we consider the implications of these results for developing teaching protocols that help the teacher to take appropriate actions to remedy prediction errors.

\section{Related Work}

The problem of debugging statistical models has been studies in a number of contexts. An excellent example of this work is the work by Amershi et al (2015) who also provides references to other related work. \nocite{Amershi:2015:MRP:2702123.2702509} 
Our categorization of prediction errors extends the informal categorization provided by Amershi et al (2015).
In that work, the authors describe potential sources of prediction errors in developing tools for identifying and exploring prediction errors. Specifically they consider three sources of errors; insufficient data, feature deficiencies, and mislabeled data. In our categorization, errors of insufficient data are a specific type of learner error that we call an objective error (they do not consider other types of learner errors), feature deficiencies are a specific type of representation error that we call feature blindness and mislabeled data is what we call mislabeling errors. Amershi et al (2015) do not consider boundary errors. 

The concept of an invalidation set is related to a number of existing concepts in the theory of machine learning include the exclusion dimension (Angluin 1994), \nocite{Angluin:2004:QR:982360.982362} the unique specification dimension (Hedigus 1995), \nocite{Hegedus:1995:GTD:225298.225311} and the certificate size (Hellerstein et al 1996). \nocite{Hellerstein:1996:MQN:234752.234755} Our focus, however, is on teaching with both labels and features whereas previous work considers only teaching with labels. 

\section{Prediction Errors}

In this section, we define the set of prediction errors that can arise when a teacher teaches a machine to classify objects by providing labeled examples and features. In addition, we provide essential definitions for the remainder of the paper.

We are interested in building a classifier of objects. We use $x$ and $x_i$ to denote particular objects and $X$ to denote the set of objects of interest. We use $y$ and $y_i$ for particular labels and $Y$ to denote the space of possible labels. For binary classification $Y=\{0,1\}$.  A classification function is a function from $X$ to $Y$.\footnote{Note that, while we call this mapping a classification function, the definition encompasses a broad class of prediction problems including structured prediction, entity extraction, and regression.} The set of classification functions is denoted by $\scriptC = X\rightarrow Y$. We use $\target$ to denote the target classification function that the teacher wants to teach the machine to implement.

One essential ingredient that a teacher provides are features or functions which map objects to scalar values. A {\em feature} $f_i$ (or $g_i$) is a function from objects to real numbers (i.e. $f_i \in X\rightarrow \real$). 
We denote the set of teachable feature functions by $R=\{f_1, f_2,\ldots \}$ and call a finite subset of $R$ a {\em feature set} (i.e., $F\subset 2^R$).
Clearly not all feature functions are directly teachable --- if the target classification function were teachable then we would not need to provide labeled examples.
The feature set $F_i=\{f_{i,1},\ldots,f_{i,p}\}$ is $p$-dimensional. We use a $p$-dimensional feature set to map an object to a point in $\real^p$. We denote the mapped object $x_k$ using feature set $F_i$ by $F_i (x_k )=(f_{i,1} (x_k ),\ldots,f_{i,p} (x_k ))$ where the result is a vector of length $p$ where the $j^{th}$ entry is the result of applying the $j^{th}$ feature function in $F_i$ to the object. 

Another essential ingredient that a teacher provides is a training set, a set of labeled examples.
A {\em training set} $T\subset X\times Y$ is a set of labeled examples.
We say that the training set $T$ has $n$ examples if $|T|=n$ and denote the set of training examples as $\{(x_1,y_1),\ldots,(x_n,y_n)\}$. A training set is unfeaturized. We use feature sets to create featurized training sets.  For $p$-dimensional feature set $F_i$ and an $n$ example training set $T$ we denote the featurized training set $F_i(T)= \{(F_i (x_1 ),y_1 ),\ldots,(F_i (x_n ),y_n)\}\in \{\real^p\times Y\}^n$. We call the resulting training set an $F_i$ featurized training set or the $F_i$ featurization of training set $T$. 

The method by which the machine learns a classification function is called a learning algorithm. A learning algorithm is, in fact, a set of learning algorithms as we now describe. First, a $d$-dimensional learning algorithm $\ell_d$ is a function that takes a $p$-dimensional feature set $F$ and a training set $T$ and outputs a function $h_p\in \real^p\rightarrow Y$. Thus, the output $h_p$ of a learning algorithm using $F_i$ and training set $T$ can be composed with the functions in the feature set to yield a classification function of objects (i.e., $h_p\circ F_i\in \scriptC$). The hypothesis space of a $d$-dimensional learning algorithm $\ell_d$ is the image of the function $\ell_d$ and is denoted by $H_{\ell_d}$ (or $H_d$ if there is no risk of confusion). 
A classification function $c\in \scriptC$ is {\em consistent with a training set} $T$ if $\forall (x,y)\in T$ it is the case that $c(x)=y$. A $d$-dimensional learning algorithm $\ell_d$ is {\em consistent} if the learning algorithm outputs a hypothesis consistent with the training set whenever there is a hypothesis in $H_d$ that is consistent with the training set. A {\em vector learning algorithm} $\ell=\{\ell_0,\ell_1,\ldots \}$ is a set of $d$-dimensional learning algorithms one for each dimensionality.  A {\em consistent} vector learning algorithm is one in which each of the $d$-dimensional learning algorithms is consistent. Finally, a {\em (feature-vector) learning algorithm} $\learner$ takes a feature set $F$, a training set $T$, and a vector learning algorithm $\ell$ and returns a classification function in $\scriptC$. In particular $\learner_\ell(F,T)=\ell_{|F|}(F,T)\circ F\in \scriptC$. 
We say that a classification function $c$ is $F$-$\learner$-learnable if there exists a training set $T$ such that $\learner(F,T)=c$. We denote the set of $F$-$\learner$-learnable functions by $\scriptC(F,\learner)$.
When the vector learning algorithm is clear from context or we are discussing a generic vector learning algorithm we drop the $\ell$ and write $\learner(F,T)$. One important property of a feature set is whether it is sufficient to teach the target classification function $\target$.
A feature feature set $F$ is {\em sufficient} for learner $\learner$ and target classification function $\target$ if $\target$ is $F$-$\learner$ learnable (i.e. $\target\in \scriptC(F,\learner)$).

The central component of an interactive machine learning system for teaching a classification function is a teaching protocol. A teaching protocol is the method by which a teacher teaches a machine learning algorithm. While not our primary focus in this paper, our interest in teaching protocols, is that they (1)  provide a means of illustrating the potential value of the results we provide and (2) provide a valuable avenue for future exploration as alternative teaching protocols provide different types of support for teachers in their efforts to build a classifier.  

Finally, we define a definition for a prediction error.
An object $x\in X$ is a {\em prediction error} for training set $T$, feature set $F$, and learning algorithm $\learner$ if the trained classifier $\learner(F,T)=c$ does not agree with the target classification function on the object (i.e., $c(x)\neq \target(x)$).
We distinguish two types of prediction errors; a {\em training set prediction error} in which the prediction errors is on an object in the training set $x\in T_X= \{x|(x,y)\in T\}$ and a {\em generalization error} in  which the object is not in the training set (i.e., $x \in X\setminus T_X$).

\section{A Characterization of Prediction Errors}

In this section, we develop a categorization for prediction errors considering both training set and generalization errors. We also demonstrate that our categorization is exhaustive, that is, we provide a characterization of prediction errors. Our categorization is relative to a particular training set $T$, feature set $F$, and learning algorithm $\learner$. We describe four categories of errors: mislabeling errors, representation errors, learner errors, and boundary errors. Generalization errors are of a different nature than training set prediction errors due to the fact that they are not in the training set. This difference is important because the teacher can only see a generalization error when they provide a label for an object not in the training set. We classify the types of generalization errors  relative to a particular training set $T$, feature set $F$, and learning algorithm $\learner$ by considering the result of adding a correctly labeled version of the object to the training set (i.e., for generalization error $x\in X\setminus T_X$ we use training set $T'=T\cup \{(x,\target(x))\})$.

\subsection{Mislabeling Errors}

A {\em mislabeling error} is a labeled object such that the label does not agree with the target classification function (i.e., a labeled example $(x,y)$ such that $y\neq \target(x)$). 
At first glance it is not clear that mislabeling errors have anything to do with a prediction error, however,  mislabeling errors can give rise to prediction errors. In particular, if the learned classifier matches the label of a mislabeled object there will be a prediction error. For instance, if we have only one labeled object $(x,1)$ in a training set and it is mislabeled then any consistent classifier will result in a prediction error. This type of prediction error arises due to an error by the teacher (a.k.a. labeler) who provided an incorrectly labeled example. We assume that a teacher, when confronted with a mislabeling error can correct the label to match the target classification function. In practice this may not be the case due a number of factors, including lack of clarity about the target classification function $\target$ and teacher error (see, e.g., Kulesza et al 2014). \nocite{Kulesza:2014:SLF:2556288.2557238}

\subsection{Learner Errors}

A {\em learner error} is a prediction error that arises due to the fact that the learner does not find a classification function that correctly predict the training set when such a learnable classifier exists (i.e., $\exists c\in \scriptC(F,\learner) \forall (x,y)\in T \ c(x)=\target(x)$ and $\forall c \in \scriptC(F,\learner)$ if $(\forall (x,y)\in T \ c(x)=\target(x))$ then $\learner(F,T)\neq c$). Note that when considering a generalization error we use the augmented training set $T'$. Typical learning algorithms select a function from the possible learnable classification function $\scriptC(F,\learner)$ using a fitness function or loss function. In this case, it is natural to consider two types of learner errors; {\em optimization errors} and {\em objective errors}. In an optimization error, there is a learnable classification function that correctly classifies the training set and the consistent classification function has a lower loss than $\learner(F,T)$. In an objective error, all learnable classification functions that correctly classify the training set have higher loss than $\learner(F,T)$.

\subsection{Representation Errors}

A {\em representation error} is a prediction error that arises due to the fact that there is no learnable classification function that correctly predicts the training set (i.e., $\forall c\in \scriptC(F,\learner) \exists (x,y)\in T s.t.\  c(x)\neq \target(x)$). Again, for a generalization error we use the augmented training set $T'$. Representation errors arise due to a limitation of the feature set, a limitation of the learning algorithm or both. More specifically, an error can arise due to the feature-blindness of the learning algorithm --- it does not have access to features that distinguish objects --- or that the hypothesis class of the learning algorithm is impoverished (e.g., trying to learn the x-or function with a linear classifiers).

\subsection{Boundary Errors}

Our final type of prediction error is a type of generalization error. A {\em boundary error} is a prediction error for an object $x$ if adding $(x,\target(x))$ to the training set yields a classification function $c'$ that correctly predicts the augmented training set (i.e., $c=\learner(F,T)$ and $c'=\learner(F,T')$ and $c(x)\neq c'(x)=\target(x)$). 

\subsection{Characterization of Prediction Errors}

We conclude this section by providing characterizations of training set prediction errors and of prediction errors. Our first proposition demonstrates that there are three types of training set prediction errors.

\begin{proposition} \label{thm:trainerr}
If there is a training set prediction error given a training set, feature set, and learning algorithm then there is either a mislabeling, representation, or learner error.
\end{proposition}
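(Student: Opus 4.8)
The plan is a short case analysis: first on the label that the training set attaches to the offending object, and then, in the remaining case, on whether any learnable classifier fits the training set. Fix a training set $T$, feature set $F$, and learning algorithm $\learner$; write $c=\learner(F,T)$; and let $x\in T_X$ be a training set prediction error, witnessed by a pair $(x,y)\in T$ with $c(x)\neq\target(x)$.

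First I would dispose of the mislabeling case: if $y\neq\target(x)$, then $(x,y)$ is by definition a mislabeling error and we are done, so assume $y=\target(x)$ henceforth. Next I would split on representability. If no $c'\in\scriptC(F,\learner)$ satisfies $c'(x')=\target(x')$ for every $(x',y')\in T$, then this is a representation error by definition. Otherwise, fix such a $c'$. The key observation is that, because $(x,\target(x))\in T$ yet $c(x)\neq\target(x)$, the learner's own output $c=\learner(F,T)$ fails to agree with $\target$ on the training set; hence $c$ is not one of the learnable classifiers that fit the training set, and so every learnable classifier that does fit it is distinct from $\learner(F,T)$. Together with the existence of $c'$, this is exactly the definition of a learner error. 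The three branches --- mislabeled object, no fitting learnable classifier, at least one fitting learnable classifier --- are exhaustive, so this finishes the argument.

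I do not expect a genuine obstacle here; the result is essentially a definitional unfolding. The only thing to watch is the bookkeeping: confirming that the three branches really are exhaustive, and that the clause ``$c(x')=\target(x')$ for all $(x',y')\in T$'' appearing in the learner- and representation-error definitions is correctly matched against the single witnessed failure $c(x)\neq\target(x)$ once we have reduced to the case $y=\target(x)$.
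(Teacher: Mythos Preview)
Your proposal is correct and follows essentially the same route as the paper: first dispose of the mislabeling case, then split on whether some learnable classifier matches $\target$ on $T$, yielding a representation or learner error respectively. The only cosmetic difference is that the paper checks whether \emph{any} example in $T$ is mislabeled (which immediately discharges the disjunction), whereas you check only the specific pair $(x,y)$; both versions are valid, and your explicit verification that $c=\learner(F,T)$ fails on $(x,\target(x))\in T$ is a slightly more careful unfolding of the learner-error clause than the paper gives.
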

 
\begin{proof}
Let $x$ be a training set prediction error for training set $T$, feature set $F$ and learning algorithm $\learner$. That means that there exists $(x,y)\in T$ such that $c=\learner(F,T)$ and $c(x)\neq \target(x)$. If there are mislabeled examples in $T$ we are done. If there are no mislabeled examples then it must be the case that either there is or is not a classification function in $\scriptC(F,\learner)$ that correctly classifies $T$. If there is such a classification function then we have a learner error and if not we have a representation error.
\end{proof}

The following Proposition demonstrates that the only other type of error required to capture the types of prediction errors is the boundary error.

\begin{proposition} \label{thm:prederr}
If there is a prediction error given a training set, feature set, and learning algorithm then there is either a mislabeling, representation, learner, or boundary error.
\end{proposition}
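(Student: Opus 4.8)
The plan is to split on whether the prediction error $x$ lies in the training set. If $x \in T_X = \{x' \mid (x',y')\in T\}$, then $x$ is a training set prediction error and Proposition~\ref{thm:trainerr} immediately yields a mislabeling, representation, or learner error, so nothing more is needed. The substantive case is a generalization error $x \in X\setminus T_X$, and here I would work with the augmented training set $T' = T\cup\{(x,\target(x))\}$ and the classifier $c' = \learner(F,T')$, since that is exactly the object relative to which the generalization-error categories are defined.

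For the generalization case I would rerun the trichotomy from the proof of Proposition~\ref{thm:trainerr}, but applied to $T'$, and then refine the surviving branch by one more split. First, if $T'$ contains a mislabeled example, then because $(x,\target(x))$ is correctly labeled by construction the mislabeled example must lie in $T$, giving a mislabeling error. Otherwise every label in $T'$ agrees with $\target$, so "correctly predicts $T'$" and "consistent with $T'$" coincide. If no function in $\scriptC(F,\learner)$ correctly predicts $T'$, then we have a representation error relative to $T'$.

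The remaining possibility is that some function in $\scriptC(F,\learner)$ correctly predicts $T'$, and here I would distinguish whether $c' = \learner(F,T')$ itself correctly predicts $T'$. If it does not, then since $c'\in\scriptC(F,\learner)$ it differs from every function that does correctly predict $T'$, which is precisely a learner error relative to $T'$. If $c'$ does correctly predict $T'$, then in particular $c'(x) = \target(x)$; combined with $c(x)\neq\target(x)$ (which holds because $x$ is a prediction error for $T$, with $c = \learner(F,T)$) this gives $c(x)\neq c'(x) = \target(x)$, i.e., a boundary error. These cases are mutually exhaustive, which completes the argument.

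I do not anticipate a real obstacle: the proof is essentially Proposition~\ref{thm:trainerr} applied to $T'$ with one extra case split. The only point needing care is the bookkeeping around the phrase "correctly predicts," which is phrased in terms of $\target$ rather than the supplied labels $y$; handling the mislabeling case first (as above) is what forces the two readings to coincide, so that the representation and learner sub-cases go through exactly as in Proposition~\ref{thm:trainerr}.
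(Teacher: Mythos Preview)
Your proof is correct and follows essentially the same approach as the paper's: split on whether $x$ is a training set or generalization error, invoke Proposition~\ref{thm:trainerr} for the former, and for the latter work with $T'=T\cup\{(x,\target(x))\}$ and branch on whether $c'=\learner(F,T')$ handles $T'$ correctly. Your version is in fact slightly more careful than the paper's, which collapses your four sub-cases into two (boundary error versus ``as described in case~1'') and tacks on the side assumption that $T$ itself has no training set prediction error, whereas you make the mislabeling bookkeeping explicit so that ``correctly predicts $T'$'' and ``consistent with $T'$'' coincide before you run the trichotomy.
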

 
\begin{proof}
Let $x$ be a prediction error for training set $T$, feature set $F$ and learning algorithm $\learner$. We consider two cases:

Case 1: $x$ is a training set prediction error. This case is handled in Proposition~\ref{thm:trainerr}.

Case 2: $x$ is a generalization error and there is no training set prediction error for $F$, $T$, and $\learner$. 
In this case, we consider the augmented training set  $T'=T\cup (x,\target(x))$ to identify the type of prediction error for $x$.
If $\learner(F,T')$ is consistent with the training set $T'$ we have a boundary error. Otherwise, as described in case 1, there must either be a learner error or representation error. Note that while it might be the case that there are mislabeled objects not included in the training set, such mislabeling errors are not generalization errors and not relevant as the mislabeling cannot be the source of a prediction error because it is not included in the training set. Thus, every generalization error can be associated with one three prediction error types.
\end{proof}

\section{Detecting and Removing Types of Prediction Errors}

In this section we discuss the problem of identifying the type of a prediction error that arises while a teacher teaches a classification function. We also discuss a potential approach to reducing the effort required by the teacher to identify and remove prediction errors.

\subsection{Detecting Boundary Errors}

A boundary error is a generalization error, an error for the currently trained classification function to correctly classify an unseen object. A boundary error can only arise in a teaching protocol in which there are labeled examples that are not included in a training set. The most common scenario where this happens is when there is a test set that is used to obtain an estimate of the prediction performance of the learned classification function. A teaching protocol can automatically detect whether prediction error is a boundary error by including the example in the training set and determining if the resulting classification function correctly predicts the error. A teaching protocol can potentially leverage such a test to choose when to sample examples, for instance, sampling more examples in a region with demonstrable ignorance about the boundary. This is related to the motivation for using uncertainty sampling as an active learning strategy (Settles 2012). \nocite{settles.book12}

\subsection{Detecting and Removing Learner Errors}

It is possible to completely eliminate learner errors by choosing to use consistent learning algorithm as the following proposition demonstrates.

\begin{proposition} \label{thm:consistentlearner}
If there is a training set prediction error for feature set $F$ and consistent learning algorithm $\learner$ then the error must be a mislabeling or representation error.
\end{proposition}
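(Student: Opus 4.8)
The plan is to reduce this proposition to Proposition~\ref{thm:trainerr} by showing that the use of a consistent learning algorithm rules out the third alternative --- the learner error --- leaving only mislabeling and representation errors. So the real content is the claim: a consistent learning algorithm never produces a training set prediction error that is a learner error.

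First I would recall the relevant definitions and set up notation. Suppose $x$ is a training set prediction error for training set $T$, feature set $F$, and the consistent learning algorithm $\learner$, with $c = \learner(F,T)$ and $c(x) \neq \target(x)$ for some $(x,y) \in T$. By Proposition~\ref{thm:trainerr}, the error is a mislabeling, representation, or learner error; it suffices to show it cannot be a learner error. So assume for contradiction that it is a learner error. By the definition of a learner error, there exists some $c' \in \scriptC(F,\learner)$ that correctly classifies the training set, i.e. $c'(x') = \target(x')$ for all $(x',y') \in T$. Since there are no mislabeling errors in this case (a learner error presupposes a learnable classifier matching $\target$ on $T$, and if there were a mislabeling error the classification by type in Proposition~\ref{thm:trainerr} would have stopped there), we have $y' = \target(x')$ for all $(x',y') \in T$, so $c'$ is in fact consistent with $T$ in the technical sense: $c'(x') = y'$ for all $(x',y') \in T$.

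The key step is then to unwind the definition of $F$-$\learner$-learnability and the structure of $\learner$. Since $c' \in \scriptC(F,\learner)$, there is a training set $T''$ with $\learner(F,T'') = c' = \ell_{|F|}(F,T'') \circ F$, so $c' = h \circ F$ where $h = \ell_{|F|}(F,T'') \in H_{|F|}$. Because $c'$ agrees with $y'$ on every $(x',y') \in T$, the hypothesis $h$ is consistent with the featurized training set $F(T)$: for each $(F(x'),y') \in F(T)$ we have $h(F(x')) = c'(x') = y'$. Thus $H_{|F|}$ contains a hypothesis consistent with $F(T)$. Now invoke consistency of $\learner$: the $|F|$-dimensional learning algorithm $\ell_{|F|}$ is consistent, so $\ell_{|F|}(F,T)$ is a hypothesis consistent with $F(T)$, whence $c = \learner(F,T) = \ell_{|F|}(F,T) \circ F$ satisfies $c(x'') = y''$ for all $(x'',y'') \in T$. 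In particular $c(x) = y = \target(x)$ (using that there are no mislabeled examples), contradicting $c(x) \neq \target(x)$. Hence the error is not a learner error, and by Proposition~\ref{thm:trainerr} it is a mislabeling or representation error.

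The main obstacle I anticipate is bookkeeping around the composition $\learner_\ell(F,T) = \ell_{|F|}(F,T) \circ F$ and the precise meaning of "consistent with the training set" at the featurized versus unfeaturized level --- one has to be careful that a hypothesis $h$ consistent with $F(T)$ yields a classification function $h \circ F$ consistent with $T$, and conversely, and that the definition of consistent learning algorithm is being applied at the right dimensionality $d = |F|$. There is also a minor subtlety in handling the mislabeling case cleanly: the cleanest route is to note that if $T$ contains a mislabeled example we are immediately in the mislabeling-error case and done, so we may assume $T$ has no mislabeled examples, which is exactly what licenses the identification of "$c'$ matches $\target$ on $T$" with "$c'$ matches the given labels on $T$." Once that reduction is made, the argument is a direct application of the definition of consistency.
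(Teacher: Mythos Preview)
Your proposal is correct and follows essentially the same route as the paper: reduce to Proposition~\ref{thm:trainerr}, then argue by contradiction that a consistent learner cannot produce a learner error because the existence of a learnable classifier consistent with $T$ forces $\learner(F,T)$ itself to be consistent with $T$. Your version is more careful than the paper's about the featurized/unfeaturized bookkeeping (unwinding $\learner(F,T)=\ell_{|F|}(F,T)\circ F$ and checking that consistency at the $H_{|F|}$ level lifts to consistency of the composed classifier on $T$), but the underlying argument is the same.
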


\begin{proof}
Recall the definition of a consistent learning algorithm; 
a consistent learning algorithm returns a classification function that correctly predicts the training set if there is a learnable classification function that does so. To prove the proposition we assume that there is a prediction error that is not a mislabeling or a representation error. From Proposition~\ref{thm:trainerr} we know that there must be a learner error. In this case, there is a learnable classification function that correctly classifies the training set. From the consistency of $\learner$ and the lack of representation or mislabeling errors, we know that $\learner(F,T)$ correctly classifies $T$ which implies there is no training set prediction error which is a contradiction.
\end{proof}

We have demonstrated that consistent learning functions can be used to eliminate learner errors. Next we demonstrate that consistent learning algorithms exist. 

\begin{proposition}\label{thm:logreg}
Maximum-likelihood logistic regression is a consistent learner.
\end{proposition}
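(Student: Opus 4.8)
The plan is to reduce the statement to an elementary fact about the shape of the logistic log-likelihood on linearly separable data. Since a vector learning algorithm is consistent exactly when each of its $d$-dimensional components $\ell_d$ is consistent, it suffices to fix a dimension $d$ and argue about $\ell_d$, the $d$-dimensional maximum-likelihood logistic regression learner. The first step is to pin down its hypothesis space $H_d$. A logistic model with weight vector $w\in\real^d$ and intercept $b$ (as is standard; equivalently one feature is constant) uses the logistic sigmoid $\sigma(s)=(1+e^{-s})^{-1}$ and assigns label $1$ to a point $z\in\real^d$ precisely when $\sigma(w^\top z+b)\ge 1/2$, i.e.\ when $w^\top z+b\ge 0$; so $H_d$ is exactly the family of closed halfspaces of $\real^d$. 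Consequently, ``there is an $h\in H_d$ consistent with the featurized training set $F(T)$'' is equivalent to ``$F(T)$ is linearly separable,'' and since $F(T)$ is a finite set this weak separation can be sharpened to a strict one: there are $(w_0,b_0)$ with $w_0^\top F(x_i)+b_0>0$ for every $i$ with $y_i=1$ and $w_0^\top F(x_i)+b_0<0$ for every $i$ with $y_i=0$. When $F(T)$ is not separable the consistency requirement on $\ell_d$ is vacuous, so all the content is in the separable case.

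For the separable case I would use two observations about the log-likelihood $L(w,b)=\sum_i [ y_i\log\sigma(w^\top F(x_i)+b)+(1-y_i)\log(1-\sigma(w^\top F(x_i)+b)) ]$. First, every summand is strictly negative, so $L<0$ everywhere; but evaluating along the strict separator gives $L(tw_0,tb_0)\to 0$ as $t\to\infty$ (each term tends to $\log 1=0$), so $\sup L=0$. Second, if a parameter $(w,b)$ misclassifies some training example --- say $y_i=1$ but $w^\top F(x_i)+b<0$; note that an example lying on the separating hyperplane is classified as $1$, so a misclassified positive example is strictly on the wrong side, and symmetrically for a misclassified negative example --- then the $i$-th summand of $L$ is at most $\log\sigma(0)=-\log 2$ and all remaining summands are $\le 0$, whence $L(w,b)\le -\log 2$. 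Combining the two: any parameter that misclassifies some training example lies in the sublevel set $\{L\le -\log 2\}$, which is bounded away from the supremum $0$, and in particular is beaten by $(tw_0,tb_0)$ for $t$ large; so no such parameter maximizes the likelihood. Hence the maximum-likelihood classifier agrees with $\target$ on all of $T$, and since this holds for every $d$, the learner is consistent.

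The one real subtlety --- the step I would be most careful about --- is that on separable data the logistic likelihood has no finite maximizer: the supremum $0$ of $L$ is only approached as $\|w\|\to\infty$ along a separating direction, so ``the maximum-likelihood solution'' needs an interpretation before the last line above is fully rigorous. I would handle this by keeping the conclusion at the level of the classification function and invoking the robust bound just established: every classifier that is inconsistent with $T$ arises from parameters with $L\le -\log 2$ (equivalently, with likelihood at most half of the optimal value $1$), so any reasonable realization of maximum-likelihood logistic regression in the separable case returns a consistent classifier --- for instance the limit of the optimization trajectory, which converges in direction to the maximum-margin halfspace and therefore strictly separates $F(T)$, or the vanishing-penalty limit of norm-penalized logistic regression, which has the same limiting direction. (In the non-separable case $-L$ is coercive, so a genuine maximizer exists; but there consistency is vacuous.) The remaining ingredients --- that $\sigma(ts)\to 1$ for $s>0$ and $\sigma(ts)\to 0$ for $s<0$ as $t\to\infty$, and the monotonicity bound $\log\sigma(s)\le -\log 2$ for $s\le 0$ --- are routine.
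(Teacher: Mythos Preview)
Your argument is correct and follows essentially the same route as the paper: identify that consistency is only at stake when the featurized training set is linearly separable, show the likelihood can be driven arbitrarily close to $1$ by scaling a separating direction, and bound the likelihood of any misclassifying parameter by $1/2$. The paper's proof uses the same two-sided bound (after a cosmetic reparameterization into a unit normal and a slope parameter $\beta=\|w\|$); if anything, you are more careful than the paper about the non-attainment of the supremum in the separable case.
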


We have moved the proofs for several  propositions to the end of the paper to improve readability.

The fact that maximum likelihood logistic regression is a consistent learner is due in part to the fact that the optimization problem is convex. It is also due to fact that we have restricted the functional form of the classification function to be a generalized linear form limiting the capacity of the learning algorithm. The following example demonstrates that we need not limit the capacity of the learning algorithm to have a consistent learning algorithm.

\begin{proposition} \label{thm:1NN}
One nearest-neighbor (1NN) is a consistent learner.
\end{proposition}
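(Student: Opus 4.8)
The plan is to unwind the definition of a consistent vector learning algorithm and verify the requirement one dimensionality at a time. As a vector learning algorithm, $1$NN is the family $\{\ell_0,\ell_1,\ldots\}$ in which $\ell_p$ maps a $p$-dimensional feature set $F$ and training set $T$ to the function $h_p\colon\real^p\to Y$ that labels each point of $\real^p$ by the label of its nearest point (say, in Euclidean distance) among $F(x_1),\ldots,F(x_n)$, ties broken by some fixed deterministic rule such as smallest index. By the definition of a consistent vector learning algorithm it suffices to show that each $\ell_p$ is consistent, i.e., that whenever some $h\in H_{\ell_p}$ agrees with the featurized training set $F(T)$ on every example, so does $\ell_p(F,T)$.

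The first step is to dispose of the contradictory case. Call $F(T)$ \emph{collision-free} if no two of its examples $(F(x_i),y_i)$ and $(F(x_j),y_j)$ have $F(x_i)=F(x_j)$ with $y_i\neq y_j$. If $F(T)$ is not collision-free, then no function $\real^p\to Y$ whatsoever --- in particular no member of $H_{\ell_p}$ --- can agree with $F(T)$ on every example, so the hypothesis of the consistency requirement is vacuously false and there is nothing to prove. Hence I only need to treat collision-free $F(T)$, and for those I must exhibit $\ell_p(F,T)$ as agreeing with $F(T)$ on every example.

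The second step is the short geometric argument that does the work. Fix an example $(F(x_i),y_i)$ of $F(T)$ and evaluate $h_p=\ell_p(F,T)$ at the query point $q=F(x_i)$. Since $F(x_i)$ is itself one of the stored points, the minimum distance from $q$ to the stored points is $0$, attained exactly on the index set $\{k:F(x_k)=F(x_i)\}$. Whichever index the tie-breaking rule selects from that set, collision-freeness forces its label to equal $y_i$, so $h_p(F(x_i))=y_i$. As $i$ was arbitrary, $h_p$ agrees with $F(T)$ everywhere, so $\ell_p$ is consistent; since $p$ was arbitrary, $1$NN is a consistent vector learning algorithm, hence a consistent learner.

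I expect the only real subtlety to be the bookkeeping around repeated feature vectors and the choice of tie-breaking rule: one must check that the version of $1$NN under analysis is well defined on such inputs and that consistency does not secretly hinge on which tied neighbor is chosen. The resolution above is that any deterministic tie-break suffices, precisely because collision-freeness makes all tied neighbors carry the same label. Note also that, in contrast with Proposition~\ref{thm:logreg}, this argument uses no restriction on the capacity of the hypothesis class, which is the point the surrounding text is making.
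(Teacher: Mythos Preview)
Your proposal is correct and follows essentially the same approach as the paper: fix a deterministic tie-breaking rule, split into the collision-free and non-collision-free cases, observe that in the former $1$NN memorizes every training label (since the query equals a stored point at distance $0$ and all tied neighbors share its label), and in the latter no hypothesis can match the training set so the consistency requirement is vacuous. Your write-up is in fact more careful than the paper's, which gives only an informal paragraph; in particular you make explicit that the vacuous case holds for \emph{any} function $\real^p\to Y$, whereas the paper phrases it as ``no consistent $1$NN classification function using $F$ is possible.''
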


Recall that there are two types of learner errors; optimization and objective errors. Next we illustrate how objective errors can arise when applying learning algorithms to prediction problems. The most common way in which objective errors arise is when one adds {\em regularization} to reduce generalization error. For logistic regression, this adds a penalty to the loss function that penalizes the length of the weight vector (i.e., $-\lambda ||w||$). Figure~\ref{fig:LogRegInconsistent} illustrates the 0.5 decision boundaries for different choices of regularization parameter $\lambda$. With $\lambda=0$ we obtain a consistent learning algorithm but with $\lambda=0.5$ and $\lambda=1.0$ we see examples of objective errors. Similarly, if we consider $k$-nearest-neighbor algorithms for $k>1$ there are training sets that can fail to correctly classify the training set due to the fact that the prediction for an object $x$ in the training set is also a function of $k-1$ other points that might disagree on the prediction at $x$.

\begin{figure}
\begin{center}
\includegraphics[width=1.9in]{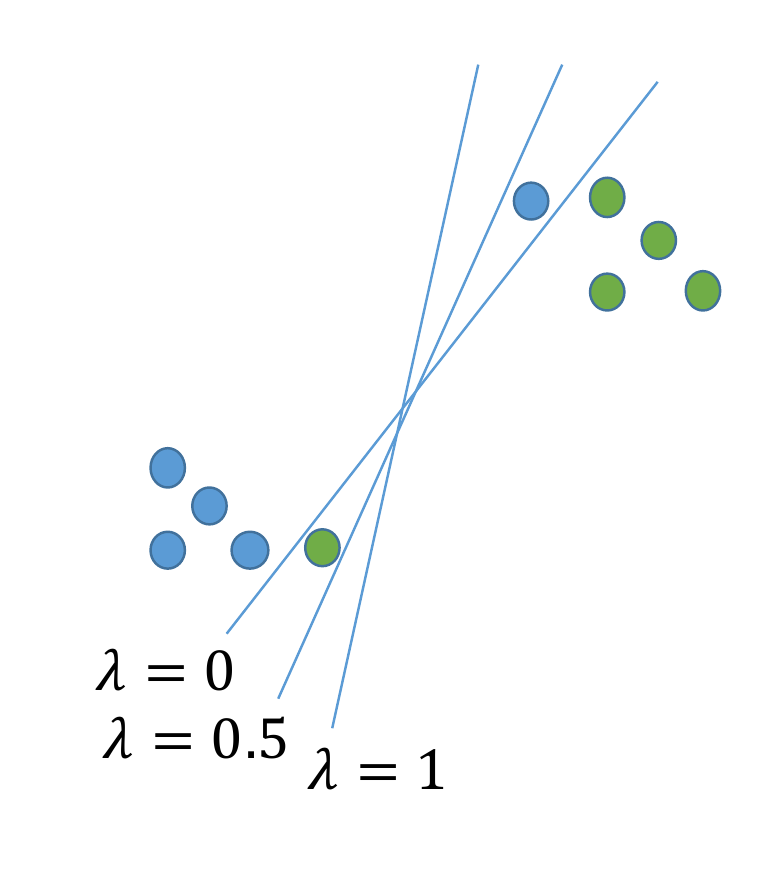}
\end{center}
\caption{\label{fig:LogRegInconsistent} A two-dimensional example demonstrating that regularized logistic regression can be inconsistent.}
\end{figure}

As argued above, we can remove learning errors from consideration by using a consistent learner. It is not clear whether this is the best approach in all teaching scenarios. It might be the case that it is beneficial to the teaching process to use an inconsistent learner to, for instance, improve generalization performance. In such circumstances, one might be able to leverage a family of learning algorithms in which there is a regularization parameter that can control the potential for objective errors. An example of such a family is the family of $\lambda$ regularized logistic regression learners. When using such a family, one can detect a learner error by training with different settings of the regularization parameter.

\subsection{Representation and Mislabeling Errors}

Next we consider the problem of detecting representation and mislabeling errors assuming that we have no learner errors. It follows, for instance, from Proposition~\ref{thm:consistentlearner} that this is the situation when using a consistent learning algorithm. 

In general, we cannot distinguish between mislabeling errors and representation errors. To see this, consider a binary classification training set of two objects $\{(x_1, 1),(x_2,0)\}$. In this situation, it is possible that the target classification function is the constant function $\target(x)=1$ and the label for $x_2$ is a mislabeling error or that there is a feature function $f_1$ that distinguishes the two objects (e.g., $f_1(x_1)=5$ and $f_1(x_2)=7$) in which case there is a representation error.

While one cannot hope to automatically distinguish mislabeling and representation errors, one can hope that the teacher can detect and distinguish such errors when they are presented to the teacher. One way in which a teaching protocol might help the teacher to detect and diagnose representation and mislabeling errors is by identifying a small set of labeled examples to inspect. We propose the use of 
an invalidation set for this purpose. An {\em invalidation set} is a training set of minimal size containing a prediction error. By identifying a minimal training set with a prediction error we aim to reduce the effort required by the teacher to determine whether prediction errors are mislabeling errors or representation errors.

The next results bounds the size of an invalidation set for any consistent linear learner including maximum-likelihood logistic regression and the one nearest neighbor classifier.

\begin{proposition}\label{thm:LinInvalidBound}
If $T$ has a prediction error for target concept $\target$ using feature set $F$ and $\learner$ where $\learner$ is a consistent linear learner then an invalidation set has at most $|F|+2$ examples.
\end{proposition}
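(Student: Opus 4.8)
The plan is to reduce the claim to a purely geometric fact about when a finite labeled point set in $\real^{|F|}$ fails to be separated by an affine hyperplane, and then to bound the size of a minimal non-separating witness by a Helly-type (Kirchberger) theorem. First I would use Proposition~\ref{thm:consistentlearner} and Proposition~\ref{thm:prederr} to reduce to the two relevant cases. If $T$ contains a mislabeled example $(x,y)$ with $y\neq\target(x)$, then the singleton training set $\{(x,y)\}$ featurizes to a single point, which is trivially linearly separable, so a consistent linear learner returns a hypothesis $h$ with $h(F(x))=y$; hence $\learner(F,\{(x,y)\})(x)=y\neq\target(x)$ is a prediction error and an invalidation set has size $1\le|F|+2$. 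A boundary error is first passed through the augmented training set $T'=T\cup\{(x,\target(x))\}$ as in Proposition~\ref{thm:prederr}, after which the only remaining possibility is a representation error.

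The substantive case is therefore a representation error: there is no $F$-$\learner$-learnable classification function consistent with the correctly labeled training set $T$, which (using that $\learner$ is linear) means that the featurized, $\{0,1\}$-labeled point set $F(T)\subseteq\real^{|F|}$ cannot be separated by an affine hyperplane. Here I would invoke Kirchberger's theorem: if two finite point sets $A,B\subseteq\real^{d}$ cannot be separated by an affine hyperplane, then already some $A'\subseteq A$ and $B'\subseteq B$ with $|A'|+|B'|\le d+2$ cannot be separated. Applying this with $d=|F|$ to the two label classes of $F(T)$ yields a subset $T^*\subseteq T$ with $|T^*|\le|F|+2$ whose featurization is not linearly separable. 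Since $\learner$ is a consistent linear learner, $\learner(F,T^*)$ is a linear classifier and so cannot agree with $T^*$ on every example; it misclassifies some $(x,y)\in T^*$, and since $y=\target(x)$ this $x$ is a prediction error for $T^*$. Thus a training set of size at most $|F|+2$ already has a prediction error, so the minimal such set — the invalidation set — has size at most $|F|+2$.

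I would finish by confirming the two named instances are covered: maximum-likelihood logistic regression is a consistent linear learner by Proposition~\ref{thm:logreg} with affine-threshold hypotheses, so Kirchberger applies directly; one-nearest-neighbor is a consistent linear learner by Proposition~\ref{thm:1NN}, and its only obstruction to realizing a labeled training set is a pair of objects with identical feature vectors and opposite target labels, a non-separating witness of size $2\le|F|+2$, so the bound holds a fortiori.

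The main obstacle I anticipate is getting the geometry precise enough that the constant comes out to $|F|+2$ rather than $|F|+1$ or $|F|+3$: this relies on the learner's hypotheses being affine (bias-inclusive) threshold functions, so that separation happens in $\real^{|F|}$ and not in a lifted space, and on quoting the colored (Kirchberger) form of Helly's theorem rather than the uncolored Radon statement. A secondary point is pinning down what an invalidation set is permitted to be — in particular that the error in question is a training-set prediction error with correctly labeled examples, so that "non-separating witness" and "training set with a prediction error" genuinely coincide — and verifying that the boundary-error reduction through the augmented training set does not inflate the bound.
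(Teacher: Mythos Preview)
Your core argument is exactly the paper's: reduce to non-separability of the two featurized label classes in $\real^{|F|}$ and invoke Kirchberger's theorem (the paper's Lemma~\ref{lem:invalid}) to extract a non-separable witness of size at most $|F|+2$. If anything, your write-up is more careful than the paper's own proof, which jumps straight to ``$F$ is not linearly sufficient'' without the explicit mislabeling/boundary case split and does not spell out why a non-separable subset forces a training-set prediction error for a consistent linear learner; your argument fills both gaps correctly.

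One small correction to your closing remarks: one-nearest-neighbor is \emph{not} a linear learner, so it is not an instance of this proposition. The paper handles 1NN separately in Proposition~\ref{thm:1NNInvalidBound} with the sharper bound of $2$, using exactly the ``two points with identical feature vectors and opposite labels'' obstruction you describe. Drop the claim that Proposition~\ref{thm:1NN} makes 1NN a consistent linear learner; the rest stands.
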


\begin{proposition}\label{thm:1NNInvalidBound}
If $T$ has a prediction error for target concept $\target$ using feature set $F$ and $\learner$ where $\learner$ is a one-nearest-neighbor learner then an invalidation set has at most $2$ examples.
\end{proposition}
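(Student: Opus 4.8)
The plan is to exploit the purely local nature of the one-nearest-neighbor rule: since the learned prediction at an object is copied from a single training example, one training example already suffices to reproduce an error, so the invalidation set will have size at most $2$ (indeed at most $1$). Let $x$ be a prediction error for training set $T$, feature set $F$, and the 1NN learner $\learner$, and write $c=\learner(F,T)$, so that $c(x)\neq \target(x)$. By the definition of one-nearest-neighbor classification, $c(x)$ equals the label $y_j$ of some example $(x_j,y_j)\in T$ whose featurization $F(x_j)$ is at minimal distance from $F(x)$ among the examples of $T$; when several examples are equidistant the choice depends on the tie-breaking convention, but some such $(x_j,y_j)$ is used, and for it $y_j=c(x)\neq \target(x)$. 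I would take $(x_j,y_j)$ as the seed of the invalidation set.

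Next I would split on whether the seed example is itself mislabeled. If $y_j\neq \target(x_j)$, take the one-example training set $T'=\{(x_j,y_j)\}$: the 1NN learner trained on $T'$ predicts $y_j$ on every object (its only neighbor is $(x_j,y_j)$ and no ties arise), so $\learner(F,T')(x_j)=y_j\neq \target(x_j)$ and $x_j$ is a prediction error for $T'$. If instead $y_j=\target(x_j)$, then from $y_j\neq \target(x)$ we obtain $\target(x_j)\neq \target(x)$; taking again $T'=\{(x_j,y_j)\}$, the learner predicts $y_j$ on $x$, and $y_j=\target(x_j)\neq \target(x)$, so $x$ is a prediction error for $T'$. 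Either way $T'$ is a training set of size $1$ that contains a prediction error, hence any minimal such training set (an invalidation set) has at most $1\le 2$ examples.

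The one step that needs care is the identification of $c(x)$ with the label of an actual example of $T$: this is just the defining property of 1NN, but it should be phrased so as to be insensitive to the tie-breaking rule (we only need that the output label is that of \emph{some} nearest neighbor in $T$) and so as to apply uniformly whether $x$ is a training-set prediction error or a generalization error --- both cases are covered because the argument uses nothing beyond $c=\learner(F,T)$ and $c(x)\neq \target(x)$. Beyond this I anticipate no real obstacle; in particular the argument does not invoke the consistency of 1NN (Proposition~\ref{thm:1NN}) or the characterization propositions, and it actually produces the sharper bound $1$, from which the stated bound $2$ follows immediately.
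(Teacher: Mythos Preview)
Your Case~2 does not produce an invalidation set in the paper's sense. An invalidation set is a minimal training set $T'\subseteq T$ that itself contains a prediction error, meaning $\learner(F,T')$ mispredicts some example \emph{in} $T'$; this is how the notion is used in the teaching protocol (the teacher is asked to inspect the labels of $T'$) and in the proof of Proposition~\ref{thm:LinInvalidBound} (the subset $U$ is one on which no linear classifier is consistent). In your Case~2 you take $T'=\{(x_j,y_j)\}$ with $y_j=\target(x_j)$; then $\learner(F,T')(x_j)=y_j=\target(x_j)$, so there is no training-set prediction error for $T'$. The error you exhibit is at $x\notin T'$, a generalization error of the one-example classifier, and that does not make $T'$ an invalidation set. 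Under your looser reading almost any singleton would qualify (the constant classifier learned from one correctly labeled example is rarely $\target$), which would trivialize the concept and also render the bound $|F|+2$ for linear learners vacuous.

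The paper argues differently: by the consistency of 1NN (Proposition~\ref{thm:1NN}), a training-set prediction error can occur only when two training objects with different labels share the same featurization $F(x_i)=F(x_j)$; the pair $\{(x_i,y_i),(x_j,y_j)\}$ is then an invalidation set of size~$2$. This bound is tight when both examples are correctly labeled but $\target(x_i)\neq\target(x_j)$, since neither singleton by itself yields a training-set error. Your Case~1 is fine and gives size~$1$ in the mislabeling scenario; to repair Case~2 you must keep both $x$ and its zero-distance neighbor $x_j$ in the set, which recovers the paper's bound of~$2$ rather than~$1$.
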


\subsection{Discussion}

We begin our discussion by presenting a teaching protocol through which a teacher might teach a machine a classification function.  This provides a means to both summarize our results and highlight open issues.

\begin{algorithm}[H]
\caption{Error-Driven-Teaching-Protocol}
\label{alg:error}
\begin{algorithmic}
\STATE{\bf Input} consistent learning algorithm $\learner$, set of objects $X$.
\STATE $T=\{\}$ \hspace{1in} // training set $T\subset X\times Y$
\STATE $F=\{\}$ \hspace{1in} // feature set $F\in \scriptF$
\STATE $c=\learner(T,F)$;
\WHILE {!\underline{Terminate}()}
\STATE $(x,y) =$ \underline{ Add-labeled-example}($X,F,T,\learner$);
\STATE $T=T\cup (x,y)$;
\STATE $c=\learner(T,F)$; //remove boundary errors by retraining
\WHILE{($\exists (x,y)\in T$ such that $c(x)\neq y$)}
\STATE Identify invalidation set $T'\subset T$
\STATE found-mislabeled-example =\underline{Check-labels}($T'$)
\IF {(found-mislabeled-example)}
\STATE \underline{Correct-Labels}($T'$)  //fix mislabeling error
\ELSE 
\STATE \underline{Add-feature}();  //fix representation error
\ENDIF
\ENDWHILE
\ENDWHILE
\STATE return c;
\end{algorithmic}
\end{algorithm}

Algorithm~\ref{alg:error} describes a teaching protocol that illustrates one potential use of our categorization of prediction errors. The teaching protocol uses the teacher to address particular sub-problems as indicated by the underlined function calls. In particular, the teacher is required to determine whether to terminate the teaching session, to choose a new example to label for the training set, to check and correct labels and to add features.

The teaching protocol in Algorithm~\ref{alg:error} assumes the use of a consistent learning algorithm which removes the need to consider learner errors. After adding a new labeled example, the classifier is immediately retrained to remove any potential boundary errors. Finally, we use the concept of an invalidation set to reduce the effort required to identify and correct mislabeling and representation errors.

This is an idealized teaching protocol but, as such, points to important research directions for providing support for teachers. These directions include support for choosing which item to select and label, choosing which feature to add, choosing when to terminate the teaching effort and support exploration of the space of objects and evolution of the target classification function.

\section{Proofs}
In this section we provide the proof for several proposition. Some of the proofs rely on convex geometry and linear algebra. We assume that the reader is familiar with basic concepts and elementary results from convex geometry and linear algebra. We denote the convex closure of a set of points by $conv(S)$.

\noindent {\bf Proposition \ref{thm:logreg}} {\em
Maximum-likelihood logistic regression is a consistent learner.}

\begin{proof}
We consider binary classification $Y=\{0,1\}$ using a $d$-dimensional feature set $F$. We use $w\in \real^d, b\in \real$ to parameterize our logistic regression. The likelihood function for logistic regression is $Pr(Y=y|X=x,F,w,b)=exp((w\cdot F(x) + b) y)/{(1+exp(w\cdot F(x) + b))}$. The maximum-likelihood estimator is $ArgMax_{w,b} \prod_{(x_i,y_i)\in T} Pr(y_i | F(x_i), w, b)$. This function is a convex function and, as such, we can guarantee that we do not have optimization errors. The likelihood maps featurized objects to real numbers and is thus not a binary classification function. We can map a likelihood function into a classification function via a threshold. We will use a threshold of 0.5 and thus $c(x)=1$ if $Pr(Y=1|X=x,F,w,b)>0.5$ and $c(x)=0$ otherwise.

We reparameterize the likelihood function using the following definitions: $w'=w/{||w||}$, $b'=-b/{||w||}$,$\beta=||w||$, and $d(x,w',b',F)=w'\cdot F(x) -b'$ where $||\cdot||$ is the Euclidean length of the vector $w$. The likelihood function is then $Pr(Y=y|X=x,F, w',b',\beta)=exp(\beta d(x,w',b',F)y)/{(1+ exp(\beta d(x, w', b',F)))}$. This parameterization has a natural interpretation. The decision boundary (probability 0.5) for logistic regression is $Hyp_{w,b,F}=\{x|w\cdot F(x) +b=0\}$ and the function $d(x,w',b',F)$ is the signed distance of a point to the decision boundary. The parameter $\beta$ controls the steepness of the logistic function (e.g., the slope of the likelihood at a point on the decision boundary in the direction normal to the decision boundary). It follows that if a set of points is linearly separable then the limiting likelihood is 1. In particular, using any separating hyperplane to define $w'$ and increasing the slope parameter $\beta$ will increase the likelihood with the likelihood approaching 1.

To prove the claim we assume that maximum-likelihood logistic regression is not consistent. In this case, there exists a feature set $F$ and a training set $T$ such that there is a learnable classification function $c$ using $F$ and maximum-likelihood logistic regression such that $c$ correctly classifies $T$ but that the classification function $c'=\learner(F,T)$ does not correctly classify $T$. Note that due to the convexity of the optimization problem we do not have an optimization error which implies that the prediction error is an objective error. To prove the claim we need to demonstrate that cannot be the case. At this point we know that there must be a labeled example $(x,y)\in T$ such that $c'(x) \neq y$. In this case, the point $F(x)$ is on the incorrect side of the decision boundary and thus the likelihood for that point is at most $1/2$. The likelihood on the other points is at most 1. Thus the maximum likelihood obtainable on a training set with at least one error is at most $1/2$. We argued above, however, that the likelihood on a separable problem will approach 1 thus we have a contradiction.

\end{proof}

\noindent {\bf Proposition \ref{thm:1NN}} {\em
One nearest-neighbor (1NN) is a consistent learner.}

\begin{proof}
Nearest-neighbor algorithms are memorization learning algorithms. As defined above, a training set can only have one label per object (i.e., $T\subset X\times Y$). It is straight-forward to relax this assumption but we choose not to do so here. A given $d$-dimensional feature set $F$ might map multiple training set objects to the same point in $\real^d$ in which case there is not a unique nearest neighbor. In this case, we assume that the 1NN algorithm chooses one a canonical object from the set of zero distance neighbors (e.g., according to some ordering over the objects). In this case, if all of the objects in each of these zero distance neighbor sets (subsets of the training set) has the same target label then the resulting classifier is consistent. If, however, there is a set of zero distance neighbors that contain objects with different target labels then the resulting classifier is not consistent but, in this case, no consistent 1NN classification function using $F$ is possible.
\end{proof}

\vspace{-0.1in}
\begin{lemma}{[Kirchberger 1903; Shimrat 1955]}\label{lem:invalid} \nocite{Kirchberger1903, shimrat1955}
Two finite sets $S,T\subset \real^d$ are strictly separable by some hyperplane if and only if for every set $U$ consisting of at most $d+2$ points from $S\cup T$ the sets $U\cap S$ and $U\cap T$ can be strictly separated.
\end{lemma}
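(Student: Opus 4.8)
The ``only if'' direction is immediate: any hyperplane that strictly separates $S$ from $T$ also strictly separates $U\cap S$ from $U\cap T$ for every $U$. So the content is the converse, and the plan is to deduce it from Helly's theorem after lifting the separation problem one dimension upward.

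First I would parametrize affine functions on $\real^d$ by pairs $(a,b)\in\real^{d+1}$ via $x\mapsto a\cdot x-b$, and to each point $p\in S\cup T$ attach a convex subset of $\real^{d+1}$: for $p\in S$ put $K_p=\{(a,b):a\cdot p-b>0\}$ and for $p\in T$ put $K_p=\{(a,b):a\cdot p-b<0\}$. Each $K_p$ is an open half-space, hence convex. The bookkeeping step is the equivalence, valid for every finite $U\subseteq S\cup T$: the sets $U\cap S$ and $U\cap T$ are strictly separable by a hyperplane if and only if $\bigcap_{p\in U}K_p\neq\emptyset$. One direction is the definition; for the other, a common point $(a,b)$ yields an affine function positive on $U\cap S$ and negative on $U\cap T$, and whenever both of these sets are nonempty the vector $a$ cannot vanish (if $a=0$ the two sign conditions contradict each other), so $\{x:a\cdot x=b\}$ is a genuine hyperplane. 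The degenerate situations --- $S\cap T\neq\emptyset$, or one of $S,T$ empty, or $|S\cup T|\le d+2$ --- I would dispose of at once: in each the claimed equivalence holds for trivial reasons (for the last, take $U=S\cup T$).

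With the reduction in place I would invoke Helly's theorem in $\real^{d+1}$ for the finite family $\{K_p:p\in S\cup T\}$: this family has a common point if and only if every subfamily of at most $(d+1)+1=d+2$ of the $K_p$ has a common point. Reading both sides through the equivalence above, the left-hand side says exactly that $S$ and $T$ are strictly separable, and the right-hand side says exactly that $U\cap S$ and $U\cap T$ are strictly separable for every $U\subseteq S\cup T$ with $|U|\le d+2$. That is the lemma.

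The one point that genuinely needs care is strictness: since the $K_p$ are open I must use the finite form of Helly's theorem, which holds for arbitrary convex sets with no closedness or boundedness hypothesis --- working with closed half-spaces would deliver only non-strict separation. I also rely on the observation that $a\neq 0$ for any common point once both classes are present, so that a point of $\bigcap_p K_p$ really encodes a separating hyperplane. An alternative route sidesteps Helly entirely: lift each point $p$ to $\hat p=(p,1)\in\real^{d+1}$; then $conv(S)\cap conv(T)\neq\emptyset$ --- equivalently, $S$ and $T$ are not strictly separable, the hulls being compact polytopes --- holds exactly when $0\in conv(\{\hat s:s\in S\}\cup\{-\hat t:t\in T\})$, and Carath\'eodory's theorem in $\real^{d+1}$ then expresses such a $0$ as a convex combination of at most $d+2$ of these lifted points, picking out the desired $U$. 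Either way the argument is short once the lift to $\real^{d+1}$ is in hand; the remaining steps are routine.
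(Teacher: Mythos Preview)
Your argument is correct. The reduction to Helly's theorem in $\real^{d+1}$ via the half-spaces $K_p$ is the standard modern proof of Kirchberger's theorem, and you have handled the one genuine subtlety (that a common point $(a,b)$ of the $K_p$ must have $a\neq 0$ once both classes are represented) as well as the edge cases. The alternative Carath\'eodory route you sketch is also sound.

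There is, however, nothing to compare against: the paper does not prove this lemma. It is stated as a classical result attributed to Kirchberger (1903) and Shimrat (1955) and then invoked as a black box in the proof of Proposition~\ref{thm:LinInvalidBound}. So your proposal supplies a proof where the paper simply cites one.
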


\noindent {\bf Proposition \ref{thm:LinInvalidBound}} {\em
If $T$ has a prediction error for target concept $\target$ using feature set $F$ and $\learner$ where $\learner$ is a consistent linear learner then an invalidation set has at most $|F|+2$ examples.}

\begin{proof}
Let $X$ be our set of objects. Again define Define $S=\{F(x)\in \real^d| x\in X$ and $\target(x)=1\}$ and $T=\{F(x)\in \real^d | x\in X$ and $\target(x)=0\}$. 
Because $F$ is not linearly sufficient there is no separating hyperplane for $S$ and $T$. From Lemma~\ref{lem:invalid} and the fact that $F$ is $d$-dimensional, we know that there must be a subset $U \subset  \{F(x)|x\in X\}$ where $|U|\leq d+2$ and such that $U\cap S$ and $U\cap T$ are not separated by any hyperplane.
\end{proof}

\noindent {\bf Proposition \ref{thm:1NNInvalidBound}} {\em
If $T$ has a prediction error for target concept $\target$ using feature set $F$ and $\learner$ where $\learner$ is a one-nearest-neighbor learner then an invalidation set has at most $2$ examples.
}

\begin{proof}
A 1NN classifier can have an invalidation set for a $p$-dimensional feature set only if the feature set maps two objects with different class labels to the same point in $\real^p$. A set with one such object from each class is an invalidation set.
\end{proof}

\section{Acknowledgments}
Thanks to Patrice Simard, Max Chickering, Jina Suh, Carlos Garcia Jurado Suarez, and Xanda Schofield for helpful discussions about prediction errors.
\newpage

\bibliography{TeachingEffort}
\bibliographystyle{aaai}
\end{document}